\theoremstyle{plain}
\newtheorem{theorem}{Theorem}[section]
\newtheorem{proposition}[theorem]{Proposition}
\theoremstyle{definition}
\theoremstyle{remark}
\newtheorem{remark}[theorem]{Remark}
\icmltitlerunning{Harnessing Neural Unit Dynamics for Effective and Scalable Class-Incremental Learning}
\begin{document}

\twocolumn[
\icmltitle{Harnessing Neural Unit Dynamics for Effective and Scalable\\ Class-Incremental Learning}



\icmlsetsymbol{equal}{*}

\begin{icmlauthorlist}
	\icmlauthor{Depeng Li}{sch1}
	\icmlauthor{Tianqi Wang}{sch1}
	\icmlauthor{Junwei Chen}{sch1}
	\icmlauthor{Wei Dai}{sch2}
	\icmlauthor{Zhigang Zeng}{sch1}
\end{icmlauthorlist}

\icmlaffiliation{sch1}{School of Artificial Intelligence and Automation, Huazhong University of Science and Technology, Wuhan, China}
\icmlaffiliation{sch2}{School of Information and Control Engineering, China University of Mining and Technology, Xuzhou, China}

\icmlcorrespondingauthor{Zhigang Zeng}{zgzeng@hust.edu.cn}

\icmlkeywords{Machine Learning, ICML}

\vskip 0.3in
]



\printAffiliationsAndNotice{}  

\begin{abstract}
Class-incremental learning (CIL) aims to train a model to learn new classes from non-stationary data streams without forgetting old ones. In this paper, we propose a new kind of connectionist model by tailoring neural unit dynamics that adapt the behavior of neural networks for CIL. In each training session, it introduces a supervisory mechanism to guide network expansion whose growth size is compactly commensurate with the intrinsic complexity of a newly arriving task. This constructs a near-minimal network while allowing the model to expand its capacity when cannot sufficiently hold new classes. At inference time, it automatically reactivates the required neural units to retrieve knowledge and leaves the remaining inactivated to prevent interference. We name our model AutoActivator, which is effective and scalable. To gain insights into the neural unit dynamics, we theoretically analyze the model’s convergence property via a universal approximation theorem on learning sequential mappings, which is under-explored in the CIL community. Experiments show that our method achieves strong CIL performance in rehearsal-free and minimal-expansion settings with different backbones.
\end{abstract}

\section{Introduction}
Contrary to typical machine learning methods that work on independent and identically distributed data, class-incremental learning (CIL) tackles the problem of training a single model on non-stationary data distributions. In this scenario, tasks typically consist of subsets of disjoint classes that are presented sequentially, without providing task identities at inference time~\cite{wang2023beef}. However, with data of the current task accessible but none (at least the bulk) of the past, CIL is challenged by a sharp performance decline on the previously learned tasks, known as catastrophic forgetting problem~\cite{mccloskey1989catastrophic}.

Recently, CIL of neural networks has seen explosive growth in striving for less forgetting~\cite{bonicelli2022effectiveness, tong2023incremental, qiao2024class, li2024MVCNet}. Prior works fall into three main categories~\cite{masana2022class}. \textit{Rehearsal-based approaches} maintain a small portion of past samples and mix them with that of a new task at either input layer (pixel level)~\cite{NIPS2017GEM, bang2021rainbow} or hidden layer (internal representations)~\cite{van2020brain, hayes2020remind}. However, this line of work suffers from substantial performance degradation with a smaller buffer that carries inadequate task-specific knowledge and becomes infeasible when a rehearsal buffer is not allowed due to memory constraints or privacy issues. \textit{Regularization-based approaches} aim to minimize the impact of learning new tasks on the weights~\cite{PNAS2017EWC, wolczyk2022continual} or feature representations~\cite{ICML2018Online_EWC, li2024CLDNet} that are important for previous tasks. Although avoiding data storage, the involved penalty terms make a fixed-size model rather inflexible to find the optimal solutions as it retains the memory of previous classes entirely in the parameter space. \textit{Architecture-based approaches} dynamically adapt network components by expansion~\cite{CVPR2021EFT, yang2022dynamic} or mask~\cite{serra2018overcoming, ke2021achieving} operation to absorb knowledge for novel classes. Nevertheless, network expansion usually renders the model size grow quickly as each session proceeds, which should be counted into the memory budget for a fair comparison~\cite{zhou2022model}. 

\begin{figure*}[ht]
	\begin{center}		\centerline{\includegraphics[width=1.9\columnwidth]{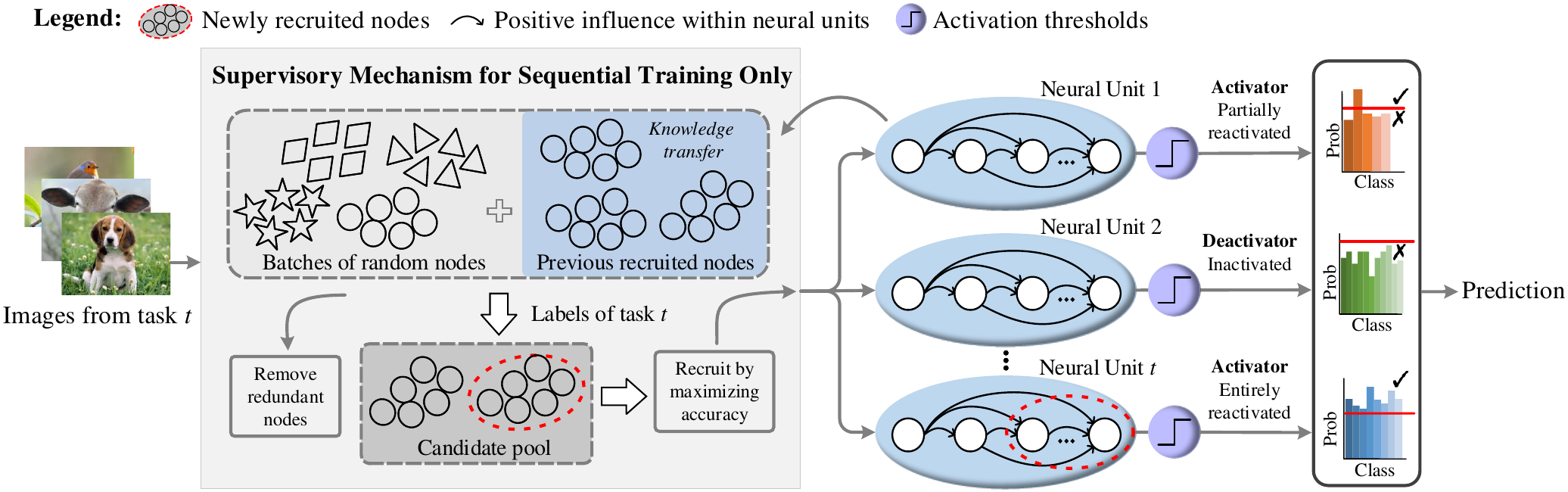}}
		\caption{Overview of AutoActivator. It first generates several batches of random nodes, denoted by different shapes; Then, together with existing ones for knowledge transfer, it parsimoniously recruits new nodes meeting the supervisory mechanism (e.g., in red circles) to a scalable neural unit, where those joined ones are positively influenced by each other as marked by black arrows. In AutoActivator, the former layers are built under the guidance of supervisory mechanism (Section~\ref{Sec_Supervisory_Mechanisms}) while the final classifier layer is step-wise updated by close-formed solutions (Section~\ref{Sec_Reactivation}). The activation thresholds render a neural unit partially/entirely active or inactivated for prediction.}
		\label{Overview}
	\end{center}
	\vskip -0.2in
\end{figure*}


Yet, in the brain---which clearly has implemented an efficient and scalable function for incremental learning---the \textit{reactivation of neuronal activity patterns} that represent previous experiences is believed to be important for stabilizing new memories~\cite{rasch2007maintaining, o2010play}. Motivated by the principle of learning and memory in cognitive neuroscience, this paper proposes a new kind of connectionist model that automatically reactivates the involved collections of nodes (dubbed neural units)\footnote{Herein the node means a neuron and incoming weights and bias associated with it and neural unit is collection of such nodes.} as activators to retrieve knowledge while leaving the remainder as deactivators to avoid crosstalk, hence the name AutoActivator. This is achieved by harnessing neural unit dynamics that adapt the behavior of neural networks for CIL. As shown in Figure~\ref{Overview}, AutoActivator pioneers a novel CIL paradigm that runs through the training and test phase: 

As opposed to the over-parameterized or expanding-and-pruning implementations, one can start with modeling the neural unit from scratch and dynamically grow the network as actually needed by a given task. Specifically, in each training session, we first randomly allocate several batches of random nodes together with the counterparts from earlier sessions and introduce a supervisory mechanism to remove the redundant nodes for the current session. Those meeting the supervisory mechanism are temporarily on standby in the candidate pool. Then, only one batch that contributes to causing a maximum reduction in training errors is added to the corresponding neural unit, i.e., the recruited nodes are positively influenced by the existing ones such that each plays an irreplaceable role. This way constructs a minimalist network for sequential tasks and thus suffices to train more incoming tasks. Meanwhile, we parameterize each neural unit with an activation threshold measured by the predicted probability during training, which guards the decision boundary of learned classes against distortion. At inference time, given batches of test instances from a certain task or class trained, the reactivation of neural units is performed partially or entirely without knowing the task identities. 

\textbf{Our main contributions are threefold:}

\begin{itemize}
	\item We design neural unit dynamics that govern the behavior of neural networks, including the rules of node generation/connection, activation threshold, and update, as well as interactions responding to non-stationary data streams. The model’s convergence property on learning sequential mappings is theoretically guaranteed.
	
	\item AutoActivator is an efficient and scalable CIL method, characterized by parsimoniously constructing a CIL model whose complexity is commensurate with the intrinsic complexity of each learning task. The method is inherently immune to catastrophic forgetting as neural units reactivated partially or entirely do not infringe upon others, and has strong task-order robustness. 
	
	\item Experiments on multiple benchmark datasets consistently demonstrate that our method provides competitive CIL performance, with absolute superiority of rehearsal-free and minimal-expansion desiderata.
\end{itemize}

\section{Related Work}
\textbf{Class-Incremental Learning.}
We discuss a selection of representative CIL approaches and how they relate to our work. Rehearsal-based approaches explicitly preserve data from previously learned tasks to retrain with the current task. Aided with rehearsal buffers, IL2M~\cite{ICCV2019IL2M} rectifies the network predictions, RM~\cite{bang2021rainbow} focuses on the classification uncertainty to select hard samples, and i-CTRL~\cite{tong2023incremental} is founded on structured representations for rehearsal. Our method does not buffer past data for the whole CIL process and thus eliminates shortcomings such as scalability and privacy issues. Regularization-based approaches involve penalty terms to vary the plasticity of parameters. EWC~\cite{PNAS2017EWC} is the pioneer of this branch, followed by SI~\cite{ICML2017SI}, and MAS~\cite{ECCV2018MAS}. Their network parameter is associated with the weight importance computed by different strategies. Nevertheless, the challenge is to correctly assign credit to the network weights when the number of tasks is large. Our method keeps the neural unit weights intact during sequential training, i.e., the newly recruited nodes without infringing upon others. Architecture-based approaches isolate existing model parameters or attach additional parameters as each session proceeds. Methods such as DER~\cite{yan2021dynamically}, FOSTER~\cite{wang2022foster}, and DNE~\cite{hu2023dense} acquire sufficient learning capacity by adding a sub-network per task. However, the increased capacity for future tasks must be meticulously balanced with the number of parameters added, particularly considering that the number of tasks the model needs to learn is often unknown in advance. Our method differs in that (i) the expansion quota is commensurate with the intrinsic complexity of each task; (ii) it's more effective to start with a small/compact branch instead of scaling arbitrarily and pruning; and (iii) empirically, our final memory budget is comparable or even superior to the non-growing networks in regularization-based methods.

\textbf{Neural Network Dynamics.}
The learning dynamics of a connectionist model, such as an artificial neural network (ANN), refers to how the network's internal state evolves over time in response to inputs~\cite{vahedian2021convolutional, marton2022linking}. This can involve different levels, including the connections of the neurons, the activation patterns, the flow of information through the network's layers, and the overall convergence and learning behavior~\cite{vyas2020computation}. 
For example, training a feed-forward neural network with random hidden nodes has been explored in the single-task learning~\cite{pao1992functional, li2017insights}. The general idea is to randomly generate hidden nodes (weights and biases), and only output weights need to be tuned in either a deterministic or nondeterministic manner. Such a randomized learning dynamic has demonstrated great potential in developing fast learner models and easy-implementation learning algorithms, such as convolutional/graph randomized networks \cite{zhang2016visual, huang2022graph}. By bridging this intriguing learning dynamic to the architectural update paradigm, our work attempts to build an efficient and scalable network for CIL.

\section{Class-Incremental Learning Setup}
\label{CIL_setup}
\textbf{Notations.} Denote ${\Gamma}=\{{g}_1, {g}_2, \dots\}$ as a set of bounded nonconstant piecewise continuous functions, span(${\Gamma}$) as a function space spanned by ${\Gamma}$, and $L_2({D}_t)$ as the space of all Lebesgue measurable functions ${f} = [{f}_1, {f}_2, \dots, {f}_{C_t}]: \mathbb{R}^{M_t}\rightarrow \mathbb{R}^{C_t}$ defined on ${D}_t$. Hence, $L_2$ norm is defined as
\begin{equation}
	\Vert{f}\Vert = \bigg(\sum_{c=1}^{C_t}\int_{{D}_t}|{f}_c(x)|^2 dx\bigg)^{\frac{1}{2}}<\infty
\end{equation}
\noindent The inner product of ${\vartheta}=[{\vartheta}_1, {\vartheta}_2, \dots, {\vartheta}_{C_t}]: \mathbb{R}^{M_t}\rightarrow \mathbb{R}^{C_t}$ and ${f}$ is further formulated as
\begin{equation}\label{product}
	\langle{f},{\vartheta}\rangle = \sum_{c=1}^{C_t}\langle{f}_c,{\vartheta}_c\rangle = \sum_{c=1}^{C_t}\int_{{D}_t}{f}_c(x){\vartheta}_c(x)dx
\end{equation}
We now define CIL formally. A model sequentially learns from the supervised learning datasets ${D}_t=\{({X}_t,{Y}_t)|{X}_t\in \mathbb{R}^{N_t\times M_t}, {Y}_t\in \mathbb{R}^{N_t\times C_t}\}$ of task $t$ ($t=1,2,\dots,T$), where ${X}_t$ is the data, ${Y}_t$ is the label, $N_t$ is the number of samples, $M_t$ and $C_t$ are the dimensions, respectively. The model $\mathcal{M}({X}_{t-1};{\theta}_{t-1})$~$(t\geq2)$ trained on previous task(s) is parameterized by its connection weights ${\theta}_{t-1}$. The objective is to train an updated model $\mathcal{M}({X}_t;{\theta}_t)$ that accommodates the newly emerging $C_t$ classes, during which the data of previous tasks is inaccessible. When fed test instances from any of tasks 1 to $T$, the model $\mathcal{M}({X}_T;{\theta}_T)$ can make predictions without task descriptors/identifiers.

\textbf{Fair Comparisons.}
Since different CIL methods have very different requirements in data, networks, and computation, it is intractable to compare all under the same experimental conditions. Following the suggestion in~\cite{zhou2022model}, we holistically evaluate different methods by considering both accuracy and memory cost for a fair comparison. We align the memory cost of model size and exemplar buffer (if any) by switching them to a 32-bit floating number which we refer to as~\textit{memory budget}. Also, with the increasing prominence of foundation models, pre-trained models equipped with informative representations have become available for various downstream
requirements~\cite{mcdonnell2023ranpac, mehta2023empirical}. Following the settings in prior work~\cite{cha2021co2l, rios2022incdfm, bonicelli2022effectiveness, tang2023prompt}, we optionally inject AutoActivator into some advanced pre-trained backbones such as ResNet (used in PCL~\cite{AAAI2021PCL}, OWM~\cite{NMI2019OWM}, etc.) and ViT (used in DualPrompt~\cite{wang2022dualprompt}, CODA-Prompt~\cite{smith2023coda},  etc.). Such a setting accommodates real-world scenarios where pre-training is usually involved as a base session~\cite{bonicelli2022effectiveness}. We conduct experiments across multiple datasets with or without starting with the same pre-training.

\section{Methodology}
\label{Methodology}
Our AutoActivator is a new kind of connectionist model (see Figure~\ref{Overview}), with tailored rules of node generation/connection, activation threshold, and update, as well as interactions responding to sequential tasks. To obtain a good grasp of the CIL model, we first provide the theoretical guide to node expansion in Section~\ref{Sec_Supervisory_Mechanisms}. We then perform the reactivation of involved neural units for learning-without-forgetting decision-making in Section~\ref{Sec_Reactivation}. 

\subsection{Modeling Neural Units via Supervisory Mechanism}
\label{Sec_Supervisory_Mechanisms}
Instead of empirically over-parameterized or expanding-and-pruning implementations, we seek a brand-new solution with theoretical support during sequential training. With this consideration, we start with modeling a neural unit from scratch and then grow additional nodes as the given problem requires. In this way, the expansion quota is compactly commensurate with the intrinsic complexity of each task, and thus constructs near-minimal neural network architectures for the CIL process. However, two main problems are: (1) How to generate and connect new nodes to a scalable neural unit? (2) How to update and then reactivate the neural units recruited without recourse to task identities?

To answer the first question, we draw inspiration from recent advances in network randomization~\cite{huang2022graph, ramanujan2020s, wang2017stochastic}, a randomized learning technique for developing fast learner models and easy-implementation learning algorithms. A common and basic idea behind this technique is to randomly generate hidden nodes (weights and biases), and only output weights need to be tuned~\cite{pao1992functional, zhang2016visual, li2017insights, zhang2022all}. In this way, one can add new nodes with random weights to different network layers progressively. Specifically, given a target function $f: \mathbb{R}^{M_t}\rightarrow \mathbb{R}^{C_t}$ of task $t$ $(t=1,2,\dots)$ defined on datasets ${D}_t=\{({X}_t,{Y}_t)\}$, we suppose a neural unit has been added $L-1$ nodes one after another directly connected to its readout layer. During each training session $t$, the output function of the existing network is given by $ f_{L-1}( X_t) = \sum_{j=1}^{L-1}\beta_j(t) g_j( X_t  w_j + b_j)$ $(L=1,2,\dots, f_0 =  0)$ and the current residual error for training is denoted as $ e_{L-1}(t) =  f -  f_{L-1}$, where $w_j$ and $b_j$ are the hidden parameters, and $\beta_j$ is the output weights. Then, the following Proposition~\ref{Proposition_1} provides a solution to connect the newly generated node $ g_L$ ($ w_L$ and $b_L$) to the existing network $ f_{L-1}$ when $C_t = 1$.

\begin{proposition}
	\cite{kwok1997objective}
	\label{Proposition_1}
	Let $\Gamma$ be a set of basis functions $ g$. For a fixed $ g \in \Gamma$ $(\Vert  g \Vert \neq 0)$, the expression $\Vert  f - ( f_{L-1} +  \beta_L  g_L)\Vert$ achieve its minimum iff
	\begin{equation}\label{Bete_L}
		\beta_{L} = \langle e_{L-1},  g_L\rangle/{\Vert  g_L \Vert^2}
	\end{equation}
\end{proposition}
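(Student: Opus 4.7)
The plan is to recognize this as a one-dimensional least-squares problem in the Hilbert space $L_2(D_t)$. Since by definition $e_{L-1} = f - f_{L-1}$, the quantity to minimize is simply $\|e_{L-1} - \beta_L g_L\|^2$ over the scalar $\beta_L$, and the optimum should be the orthogonal projection of $e_{L-1}$ onto the one-dimensional subspace $\mathrm{span}(g_L)$, which yields exactly the claimed coefficient.

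Concretely, I would expand the squared norm using the inner product defined in equation (2), which for $C_t = 1$ reduces to the ordinary $L_2(D_t)$ inner product:
\begin{equation*}
\|e_{L-1} - \beta_L g_L\|^2 = \|e_{L-1}\|^2 - 2\beta_L \langle e_{L-1}, g_L\rangle + \beta_L^2 \|g_L\|^2.
\end{equation*}
Because $\|g_L\|^2 > 0$ by hypothesis, this is a strictly convex quadratic in $\beta_L$. Differentiating in $\beta_L$ and setting the derivative to zero gives the unique stationary point $\beta_L = \langle e_{L-1}, g_L\rangle / \|g_L\|^2$, and the positive second derivative $2\|g_L\|^2$ confirms this stationary point is indeed the global minimizer. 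A cleaner, differentiation-free alternative is to complete the square:
\begin{equation*}
\|e_{L-1} - \beta_L g_L\|^2 = \|g_L\|^2 \Bigl(\beta_L - \tfrac{\langle e_{L-1}, g_L\rangle}{\|g_L\|^2}\Bigr)^2 + \|e_{L-1}\|^2 - \tfrac{\langle e_{L-1}, g_L\rangle^2}{\|g_L\|^2},
\end{equation*}
so that the first term is non-negative and vanishes if and only if $\beta_L$ takes the stated value, giving both directions of the \emph{iff} in one shot.

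There is essentially no obstacle here; the statement is a textbook consequence of projection onto a one-dimensional subspace. The only points of care are to (i) invoke the inner product and norm from the paper's preliminaries rather than a generic Euclidean one, (ii) explicitly use the nondegeneracy $\|g_L\| \neq 0$ to ensure strict convexity of the quadratic (without which $\beta_L$ is not uniquely determined), and (iii) make sure both directions of the biconditional are established, which the completion-of-squares form delivers automatically via the uniqueness of the minimum of a strictly convex scalar quadratic.
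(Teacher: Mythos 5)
Your proof is correct: reducing the problem to minimizing $\Vert e_{L-1} - \beta_L g_L\Vert^2$ over the scalar $\beta_L$ and completing the square (using $\Vert g_L\Vert \neq 0$ for strict convexity and uniqueness) establishes both directions of the \emph{iff}. The paper itself gives no proof of Proposition~\ref{Proposition_1}, citing Kwok and Yeung instead, but your argument is the standard orthogonal-projection derivation that the cited result rests on, so there is nothing substantive to compare; the only cosmetic mismatch is that you work with $C_t=1$ as the surrounding text stipulates, which is exactly right.
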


Proposition~\ref{Proposition_1} lays the groundwork for how to connect a new node to the existing network but is \textit{limited to the regression problem in single-task learning}. Meanwhile, constructing such a new node by Eq. (\ref{Bete_L}) is less practical in the sense that the reduction of residual error per node expansion will be close to zero~\cite{igelnik1995stochastic, wang2017stochastic}, failing to guarantee preferable learning performance with considerable confidence and convergence rate. The main cause is that fixed random weights, which only perform the forward pass
without the backward pass, are prone to incur numerous redundant nodes. Therefore, alternatively, random node generation should be “supervisory” by exerting additional conditions in the forward pass. To this end, we introduce a supervisory mechanism to guide the node generation in modeling neural units for CIL. 

In AutoActivator, the former layers are built under the guidance of supervisory mechanism where each layer is made up of a certain number of scalable neural units, while the final classifier layer is step-wise updated by close-formed solutions. The method randomly allocates a batch of nodes in each training session. The batch that causes the highest reduction in training error is added to the neural unit of the current task. For simplicity, Theorem \ref{Theorem_2} formulates neural units over a two-layer AutoActivator, which can be stacked or injected into existing backbones.

\begin{theorem} (Universal Approximation Theorem for Convergence Property) 
	\label{Theorem_2}
	Suppose that span($\Gamma$) is dense in ${L_2}$ space and $\forall g \in \Gamma $, $G$ is a collection of some $ g$ with nonlinear activation. Given $0 < r(t) < 1$ and a non-negative real number sequence $\{\mu_L(t)\}$ w.r.t. task $t$ ($t = 1,2,\dots; c = 1,\dots,C_t$), with $\lim_{L \to +\infty }\mu_L(t) = 0$ and $\mu_L(t) \le (1 - r(t))$. For $L \footnote{Without loss of generality, we refer to $L = L(t)$ for simplicity.} = 1,2,\dots,$ and step size $l \in \mathbb{N}^+$, denoted by
	\begin{equation} \label{delta_2}
		\delta_L^\star(t)\! =\! \sum_{c=1}^{C_t}\delta_{L,c}^\star(t), \delta_{L,c}^\star(t) \! =\! (1 - r(t) - \mu_L(t))\Vert e_{L - l,c}^ \star(t)\Vert^2
	\end{equation}
	If a batch of new nodes $ G_l( X_t  W_l +  B_l)$\footnote{Note that $ G_l =  g_l$ in the special case of $l=1$.} are randomly generated to satisfy the following inequalities:
	\begin{equation} \label{Inequalities_2}
		\langle e_{L-l,c}^\star(t),  G_l \beta_{l,c}(t) \rangle \geq \delta_{L,c}^\star(t),~ c=1,2,\dots,C_t 
	\end{equation}
	\noindent and connected to the existing neural unit through the output weights in the following least-squares sense:
	\begin{equation} \label{beta_LSM_2}
		[{\beta}_1^\star(t), \dots, {\beta}_L^\star(t)] = \arg\min_{{\beta}(t)} \Vert  f - \sum_{j=1}^{L}\beta_j(t) g_j \Vert
	\end{equation}
	Then, we have $\lim_{L \to  +\infty} \Vert  f -  f_L^\star\Vert = 0$, where $ f_L^\star =  f_{L-l}^\star + \beta_l^\star(t) G_l$ and $ G_l=[ g_{L-l+1},\dots, g_L]$.
\end{theorem}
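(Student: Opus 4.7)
The plan is to show that the sequence of residual norms $\{\|e_L^\star(t)\|^2\}_L$ contracts geometrically toward zero, from which $\lim_{L\to+\infty}\|f-f_L^\star\|=0$ follows. The argument proceeds by a per-output-channel bound followed by aggregation over $c=1,\dots,C_t$, and parallels the reasoning behind Proposition~\ref{Proposition_1} lifted from the single-output, single-step setting to the multi-output, batch, sequential-task setting.

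First, I would fix a channel $c$ and exploit the least-squares optimality (\ref{beta_LSM_2}). Since the vector $L_2$ norm defined via (\ref{product}) decouples into a sum of scalar norms, the joint minimization splits into one independent LS problem per component. Using the already-optimal weights for the first $L-l$ nodes and an arbitrary candidate $\beta_{l,c}$ for the newly added batch gives the trial-versus-optimal comparison
\[
\|e_{L,c}^\star\|^2 \le \bigl\|e_{L-l,c}^\star - G_l\beta_{l,c}\bigr\|^2.
\]
Expanding the right-hand side and taking $\beta_{l,c}$ to be the orthogonal projection coefficients of $e_{L-l,c}^\star$ onto $\mathrm{span}(G_l)$, the projection identity $\|G_l\beta_{l,c}\|^2 = \langle e_{L-l,c}^\star, G_l\beta_{l,c}\rangle$ collapses the cross-term into
\[
\|e_{L,c}^\star\|^2 \le \|e_{L-l,c}^\star\|^2 - \langle e_{L-l,c}^\star, G_l\beta_{l,c}\rangle.
\]
Substituting the supervisory inequality (\ref{Inequalities_2}) then replaces the inner product by $\delta_{L,c}^\star(t)=(1-r(t)-\mu_L(t))\|e_{L-l,c}^\star\|^2$, yielding the per-channel contraction $\|e_{L,c}^\star\|^2 \le (r(t)+\mu_L(t))\|e_{L-l,c}^\star\|^2$.

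Summing this bound over $c=1,\dots,C_t$ and iterating across the successive blocks of $l$ newly added nodes yields a product bound of the form $\|e_L^\star\|^2 \le \|f\|^2 \prod_{k} (r(t)+\mu_{kl}(t))$. Because $0<r(t)<1$ and $\mu_L(t)\to 0$, every factor is eventually bounded above by $(1+r(t))/2<1$, so the logarithm of the product diverges to $-\infty$ and $\|e_L^\star\|\to 0$; monotonicity of LS error in the dictionary size handles indices $L$ that are not multiples of $l$. The assumption $\mu_L(t)\le 1-r(t)$ additionally keeps $\delta_{L,c}^\star(t)$ nonnegative so that (\ref{Inequalities_2}) is a meaningful, feasible constraint, while the density of $\mathrm{span}(\Gamma)$ in $L_2(D_t)$ guarantees that random draws able to satisfy it exist at every step.

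The step I expect to be the main obstacle is aligning the role of $\beta_{l,c}$ in hypothesis (\ref{Inequalities_2}) with what the convergence chain needs: the supervisory inequality must be invoked against precisely the $\beta_{l,c}$ that plays the role of the projection coefficient in the squared-residual expansion, so that $\|G_l\beta_{l,c}\|^2$ is exactly cancelled by part of the cross-term. Handling the $l>1$ batch case adds only mild bookkeeping---the scalar division $\beta=\langle e,g\rangle/\|g\|^2$ of the $l=1$ setting in Proposition~\ref{Proposition_1} is replaced by the Moore--Penrose pseudoinverse of the $l$-dimensional Gram block, but the projection identity used above extends verbatim. Aside from this alignment, the only delicacies are the multi-output aggregation and ensuring feasibility of (\ref{Inequalities_2}) along the entire growth sequence.
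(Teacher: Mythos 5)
Your proposal is correct and follows essentially the same route as the paper's proof: bound the fully re-optimized residual by the trial solution $e_{L-l,c}^\star - G_l\beta_{l,c}$ with $\beta_{l,c}$ the projection (pseudoinverse) coefficients, use the projection identity $\Vert G_l\beta_{l,c}\Vert^2 = \langle e_{L-l,c}^\star, G_l\beta_{l,c}\rangle$ to collapse the cross-term, invoke the supervisory inequality to obtain the contraction $\Vert e_{L}^\star\Vert^2 \le (r(t)+\mu_L(t))\Vert e_{L-l}^\star\Vert^2$, and iterate. Your explicit product bound with factors eventually below $(1+r(t))/2$ merely spells out the final limit step that the paper states tersely; the substance is identical.
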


\begin{proof} 
	See Appendix \ref{Proof_Theorem_1}. 
\end{proof}

Based on Theorem \ref{Theorem_2}, we can formulate the indicator $\xi_{L}(t) = \sum_{c=1}^{C_t}\xi_{L,c}(t)$ of supervisory mechanisms for guiding node expansion by a designated step size $l$, among which
\begin{equation} \label{xi_2}
	\begin{split}
		\xi_{L,c}(t) = & \langle e_{L-l,c}^\star(t),  G_l \beta_{l,c}(t) \rangle \\
		&- (1 - r(t) - \mu_L(t)) \langle e_{L-l,c}^\star(t),  e_{L-l,c}^\star(t) \rangle > 0
	\end{split}
\end{equation}
\noindent Intuitively, $0<r(t)<1$ matters in the residual error decreasing speed of task $t$, which resembles the learning rate of gradient descent but differs in its explicit scope; $\mu_L(t)$ can be seen as the balance coefficient for ensuring the convergence of Theorem~\ref{Theorem_2}, which can be found in its proof. 

\begin{remark}\label{Remark}
	The theoretical analysis redefines Eq. (\ref{Bete_L}) \textit{in the context of classification problem for learning sequential tasks}. Modeling neural units with supervisory mechanisms can easily add additional capacity. For one thing, it guarantees the convergence property of a model on a sequence of tasks. In particular, we generate several candidate nodes satisfying supervisory mechanisms in a one-by-one or batch-by-batch manner simultaneously and only recruit ones that cause a maximum reduction in residual errors. This is beneficial to accelerate the convergence rate. For another, during sequential training, the required expansion quota potentially matches the inherent difficulty of each newly arriving task. This not only constructs a near-minimal network architecture for CIL but also suffices to train more incoming tasks. 
\end{remark}

\subsection{Reactivating Updated Neural Units} 
\label{Sec_Reactivation}
This section addresses the aforementioned second question, i.e., update the connection weights between neural units and the readout layer (termed output weights) and then reactivate the involved neural units as activators for decision-making. Unlike neural networks with an empirically fixed topology that is unitedly trained, the updates of output weights under node expansion must be repeated every time nodes are added. Hence, computational efficiency becomes the main bottleneck when leveraging the commonly used back-propagation algorithm.

\textbf{Update.} Here we seek an alternative solution under the premise of supervisory mechanisms, which ensures that the nodes already existing in each neural unit are indispensable in learning a given task. That is the pseudoinverse of a partitioned matrix described by some earlier studies~\cite{leonides2012control, ben2003generalized, chen1999rapid}. A dynamic stepwise updating algorithm is then used to update the output weights instantly. It applies to the node expansion process in Theorem \ref{Theorem_2}, as illustrated in the following.

Assume that there has been a matrix $ A_L$ and we expand it by adding additional $l$ ($l=1,2,\dots$) node(s). Let $ G_l$ be the resulting matrix. For task $t$, outputs of the neural unit are given by $\hat{ Y}_t= A_L  W_L^\star(t)$, where $ W_L^\star(t) = [{\beta}_1^\star(t), \dots, {\beta}_L^\star(t)]$ as presented in Eq. (\ref{beta_LSM_2}). Denote by $ A_{L+l} = [ A_L,  G_l]$, we have
\begin{equation} \label{W_L+1}
	W_{L+l}^\star(t) \triangleq ( A_{L+1})^\dag  Y_t = 
	\left[\begin{array}{c}
		W_L^\star(t) - {D}{B}^\mathrm{T}{Y_t}\\
		{B}^\mathrm{T}{Y_t}
	\end{array}\right]
\end{equation}
where the pseudoinverse of $ A_{L+l}$ is
\begin{equation}\label{A_L+l_p}
	( A_{L+1})^\dag =
	\left[\begin{array}{c}
		( A_L)^\dag - {D}{B}^\mathrm{T}\\
		{B}^\mathrm{T}
	\end{array}\right]
\end{equation}
\begin{equation}\label{B.T}
	{B}^\mathrm{T}=
	\left\{ \begin{array}{ll}
		{C}^\dag & \text{if} \ {C}\neq  0\\		({D}^\mathrm{T}{D} + {I})^{-1}{D}^\mathrm{T}({A}_L)^+& \text{if} \ {C} =  0
	\end{array} \right.	
\end{equation}

\noindent where ${C}=  G_l - {A}_L{D}$ and ${D}=({A}_L)^\dag  G_l$.

This way, updating output weights has three strengths. First, the update process can be easily finished \textit{without complete retraining}, i.e., a new pseudoinverse matrix is obtained through a simple calculation of the existing ones. Second, it updates the output weights progressively when adding nodes to each neural unit. In each step, the output weights are theoretically the optimal solution in the least-square sense. Third, only output weights need to be optimized every time, as explicitly expressed in close-formed solutions. Therefore, it avoids the back-propagation of error signals.

\textbf{Reactivation.}
In the CIL scenario, task identities are typically absent at the inference time. Hence, another key technical point of AutoActivator is to partially or entirely reactivate the required neural units for task-agnostic decision-making. Herein an \textit{activation threshold} is calculated during training to be used for activating some of the neural units at the test time. Suppose $T$ tasks with $\sum_{t=1}^{T}C_t$ classes have been sequentially trained, yielding $T$ neural units. The following explains how our approach automatically performs the reactivation when fed test instances from tasks 1 to $T$.

We first retrospect the training phase to passingly introduce the activation threshold of each neural unit, averaged by the highest predicted probability belonging to class $c$ ($c=1,2,\dots, C_t$):  	
\begin{equation}\label{threshold_t}
	threshold(t) = mean.\max\!.(softmax(\hat{ Y}_t)-\alpha(t))
\end{equation}
where $\alpha(t) = 1/C_t$ is the lowest probability triggering a decision. Hence, a neural unit is finally parameterized by its supervisory mechanism-based random weights (and biases), dynamically stepwise updated output weights, and the activation thresholds. Then, for test instances of any classes seen so far, the objective is to minimize the distance: 
\begin{equation}\label{threshold_dis}
	c \leftarrow \arg\min_t |threshold(t) - \tilde{threshold}(t)|
\end{equation}
where “$\leftarrow$” means returning index, $c$ is the predicted class, and $\tilde{threshold}(t)$ is computed in Eq. (\ref{threshold_t}) but with test instances. Indeed, Eq. (\ref{threshold_t}) rectifies the inter-class confusion occurring in the similar $softmax$ outputs given test instances from different tasks; Eq. (\ref{threshold_dis}) further brings these results to a comparable level and returns the predicted class by tracking the minimal distance in terms of the activation thresholds. From the perspective of methodology, it works on test images arriving in batches or instances, among which feeding batches of test instances from a certain task or class trained facilitates the reactivation progress. Therefore, our AutoActivator can selectively reactivate the involved neural units to retrieve knowledge and leave the remaining inactivated to prevent interference. We summarize its CIL procedure 
in Algorithm~\ref{Alg_AutoActivator} in Appendix \ref{alg_1}.

\begin{table*}[t]
	\caption{Comparison results on the split MNIST and FashionMNIST datasets with MLP. We report the average classification accuracy (ACA), backward transfer (BWT) across five random task-order runs, and the aligned memory budget (MB) = model size (\#model, MB) + exemplar buffer (\#exemplar, MB), where 0 means no rehearsal. No pre-training is used for AutoActivator and all the compared methods.}
	\label{Table_MNIST_FMNIST}
	\begin{center}
		\begin{small}
			\begin{sc}
				\begin{tabular}{lcccccc}
					\toprule
					\multicolumn{1}{l}{\textbf{Method}} & 
					\multicolumn{2}{c}{\textbf{Memory budget}} & \multicolumn{2}{c}{\textbf{MNIST-10/5}} & \multicolumn{2}{c}{\textbf{FashionMNIST-10/5}} \\ 
					& \#model $\downarrow$ & \#exemplar $\downarrow$ & ACA(\%) $\uparrow$ & BWT $\uparrow$ & ACA(\%) $\uparrow$ & BWT $\uparrow$ \\ \midrule 
					SGD (lower bound) &1.82 &-  &$\sim$19.91  &-  &$\sim$19.81  &-   \\
					MTL (uper bound)&1.82 &-  &$\sim$98.56  &-  &$\sim$96.61 &-  \\ \midrule
					MAS	&5.47 &0 &44.61\scriptsize{$\pm$6.62} &$-$0.06\scriptsize{$\pm$0.06} &34.91\scriptsize{$\pm$5.47} &$-$0.61\scriptsize{$\pm$0.09} \\
					OLEWC&5.47 &0 &57.38\scriptsize{$\pm$4.04} &$-$0.38\scriptsize{$\pm$0.06}   &54.09\scriptsize{$\pm$4.03} &$-$0.40\scriptsize{$\pm$0.08}  \\
					SI &5.47 &0 &69.44\scriptsize{$\pm$4.37} &$-$0.04\scriptsize{$\pm$0.09} &52.11\scriptsize{$\pm$2.22} &$-$0.49\scriptsize{$\pm$0.06} \\
					EFT &16.54  &0&82.53\scriptsize{$\pm$1.15} &$-$0.09\scriptsize{$\pm$0.07} &74.79\scriptsize{$\pm$1.23} &$-$0.13\scriptsize{$\pm$0.05} \\
					PCL &3.01 &0 &94.14\scriptsize{$\pm$0.67} &$-$0.03\scriptsize{$\pm$0.03} &83.27\scriptsize{$\pm$0.81}  &$-$0.12\scriptsize{$\pm$0.01} \\
					AOP	&5.39 &0&94.43\scriptsize{$\pm$0.21} &$-$0.05\scriptsize{$\pm$0.02} &82.97\scriptsize{$\pm$0.95} &$-$0.14\scriptsize{$\pm$0.04} \\
					CRNet &\textbf{1.81} &0 &94.49\scriptsize{$\pm$0.32} &$-$0.02\scriptsize{$\pm$0.02} &86.01\scriptsize{$\pm$0.74} &$-$0.09\scriptsize{$\pm$0.01} \\
					FS-DGPM	&1.82 &5.98 &89.12\scriptsize{$\pm$1.14} &$-$0.08\scriptsize{$\pm$0.01} &80.89\scriptsize{$\pm$0.74} &$-$0.12\scriptsize{$\pm$0.02} \\
					NISPA &1.82 &5.98 &91.07\scriptsize{$\pm$0.86} &$-$0.04\scriptsize{$\pm$0.00} &80.93\scriptsize{$\pm$0.59} &$-$0.15\scriptsize{$\pm$0.02} \\
					BiC	&1.82 &5.98 &93.93\scriptsize{$\pm$0.58} &$-$0.04\scriptsize{$\pm$0.01} &82.36\scriptsize{$\pm$0.72} &$-$0.11\scriptsize{$\pm$0.03} \\
					ARI	&3.65 &5.98 &93.60\scriptsize{$\pm$0.57} &$-$0.04\scriptsize{$\pm$0.00} &82.89\scriptsize{$\pm$0.83} &$-$0.10\scriptsize{$\pm$0.01} \\ 
					Co$^2$L	&1.82 &5.98 &93.78\scriptsize{$\pm$0.24} &$-$0.04\scriptsize{$\pm$0.00} &80.93\scriptsize{$\pm$0.59} &$-$0.15\scriptsize{$\pm$0.02} \\ 
					RPS-Net &14.60 &5.98 &94.53\scriptsize{$\pm$1.92} &$-$0.02\scriptsize{$\pm$0.01} & 84.18\scriptsize{$\pm$1.60} &\textbf{$-$0.03}\scriptsize{$\pm$0.01}\\
					LOGD &1.82 &5.98 &94.87\scriptsize{$\pm$0.59} &$-$0.04\scriptsize{$\pm$0.01} &84.39\scriptsize{$\pm$0.47}  &$-$0.09\scriptsize{$\pm$0.03} \\
					IL2M &1.82 &5.98 &95.51\scriptsize{$\pm$0.42} &$-$0.04\scriptsize{$\pm$0.01} &82.38\scriptsize{$\pm$2.04} &$-$0.15\scriptsize{$\pm$0.03} \\ 
					AutoActivator (Ours) &2.04 &\textbf{0}  &\textbf{97.32}\scriptsize{$\pm$0.03} & \multicolumn{1}{r}{\textbf{0.00}\scriptsize{$\pm$0.00}}  &\textbf{88.46}\scriptsize{$\pm$0.06} & \multicolumn{1}{r}{-0.09\scriptsize{$\pm$0.08}} \\ \bottomrule
				\end{tabular}
			\end{sc}
		\end{small}
	\end{center}
	\vskip -0.12in
\end{table*}

\section{Experiment}
\subsection{Experiment Setting}

\textbf{Datasets.} 
We experiment on multiple datasets commonly used for CIL. \textbf{Small Scale:} Both MNIST~\cite{lecun1998gradient} and FashionMNIST~\cite{FashionMNIST} are respectively split into 5 disjoint tasks with 2 classes per task. The evaluation starts with the toy examples since AutoActivator belongs to the theoretical and embryonic approach. \textbf{Medium Scale:} CIFAR-100~\cite{CIFAR-100} is divided into 10 (25) tasks with each task containing 10 (4) disjoint classes. \textbf{Large Scale:} ImageNet-R~\cite{hendrycks2021many} has 200 classes with 24,000 samples for training and 6,000 for testing. It is split into 10 tasks with 20 classes in each task. The substantial intra-class variability renders it more akin to intricate real-world problems. We provide details of the data splits used in Appendix~\ref{DS_A}.

\textbf{Baselines.} 
We extensively compare our method with (i) representative and the latest CIL baselines: 
\textbf{IL2M}~\cite{ICCV2019IL2M}, \textbf{BiC}~\cite{wu2019large}, \textbf{LOGD}~\cite{CVPR2021LOGD}, \textbf{FS-DGPM}~\cite{deng2021flattening}, \textbf{Co$^2$L}~\cite{cha2021co2l}, \textbf{ARI}~\cite{wang2022anti}, \textbf{NISPA}~\cite{gurbuz2022nispa}, 
\textbf{DDGR}~\cite{gao2023ddgr},
\textbf{SI}~\cite{ICML2017SI}, \textbf{MAS}~\cite{ECCV2018MAS}, \textbf{OLEWC}~\cite{ICML2018Online_EWC}, 
\textbf{AOP}~\cite{guo2022adaptive}, \textbf{CRNet}~\cite{li2023CRNet},
\textbf{RPS-Net}~\cite{NIPS2019RPS-Net}, \textbf{EFT}~\cite{CVPR2021EFT}, \textbf{DER}~\cite{yan2021dynamically}, \textbf{PCL}~\cite{AAAI2021PCL}, \textbf{MORE}~\cite{kim2022multi}, \textbf{CLDNet}~\cite{li2024CLDNet}, among which the original FS-DGPM uses a multi-head incremental setting where each task has a separate classifier but task identities are not provided in our experiments; (ii) recent prompt-based methods over the pre-trained ViT: \textbf{L2P}~\cite{wang2022learning}, \textbf{DualPrompt}~\cite{wang2022dualprompt}, \textbf{CODA-Prompt}~\cite{smith2023coda}; and (iii) non-CIL baselines: stochastic gradient descent for sequential training (\textbf{SGD}; approximate lower bound) and joint multiple-task learning (\textbf{MTL}; approximate upper bound). 

\textbf{Training.} We refer to every aforementioned method's original codebases for implementation and hyper-parameter selection to ensure the best performance. We repeat each experiment five times with randomly shuffled task orderings to get the mean and the standard deviation estimates. More implementation details for \textit{architectures}, \textit{hyper-parameters}, and \textit{metrics} are provided in Appendix~\ref{Training_Details}.

\subsection{Main Comparison Results}

\begin{table*}[t]
	\caption{Comparison results on the split CIFAR-100 dataset with ResNet-18, including MORE that originally uses DeiT-S/16~\cite{touvron2021training}. Scale ratio approximately gives the \% of the final memory budget (MB) and the initial model size (MB), averaged over the two sequences. All methods allow to start with the same pre-training or learning from scratch and only the winner results are reported. 
	}
	\label{Table_CIFAR}
	\vskip 0.08in
	\begin{center}
		\begin{small}
			\begin{sc}
				\begin{tabular}{lccccc}
					\toprule
					\multicolumn{1}{l}{\textbf{Method}} &  \multicolumn{1}{c}{\textbf{Scale ratio}} & \multicolumn{2}{c}{\textbf{CIFAR-100/10}} & \multicolumn{2}{c}{\textbf{CIFAR-100/25}} \\
					&(\%) $\downarrow$ & ACA(\%) $\uparrow$ & BWT $\uparrow$ & ACA(\%) $\uparrow$ & BWT $\uparrow$ \\ \midrule 
					SGD &- & ~7.75\scriptsize{$\pm$0.17} & - & ~3.18 \scriptsize{$\pm$0.23} & - \\
					NISPA &$>$200  &37.60\scriptsize{$\pm$0.73} &$-$0.25\scriptsize{$\pm$0.02}  &29.50\scriptsize{$\pm$0.66} &$-$0.31\scriptsize{$\pm$0.01}  \\
					LOGD  &119.26   &47.45\scriptsize{$\pm$0.31} &$-$0.13\scriptsize{$\pm$0.03}  &48.71\scriptsize{$\pm$0.23} &$-$0.16\scriptsize{$\pm$0.00}  \\
					RPS-Net &$>$200 &58.95\scriptsize{$\pm$0.25} &$-$0.18\scriptsize{$\pm$0.01}  &57.43\scriptsize{$\pm$0.35} &$-$0.19\scriptsize{$\pm$0.01} \\
					DDGR &$>$200 &59.84\scriptsize{$\pm$0.57} &-  &59.15\scriptsize{$\pm$0.63} &-  \\ 
					IL2M  &119.26   &60.14\scriptsize{$\pm$0.68} &$-$0.10\scriptsize{$\pm$0.01}  &61.33\scriptsize{$\pm$0.32} &\textbf{$-$0.02}\scriptsize{$\pm$0.02} \\
					BiC &155.34  &61.03\scriptsize{$\pm$0.71} &$-$0.09\scriptsize{$\pm$0.00}  &60.24\scriptsize{$\pm$0.59} &$-$0.08\scriptsize{$\pm$0.04}  \\ 
					PCL &146.02 &63.58\scriptsize{$\pm$0.37} &$-$0.11\scriptsize{$\pm$0.01}  &62.84\scriptsize{$\pm$0.43} &$-$0.12\scriptsize{$\pm$0.01}  \\
					Co$^2$L &$>$200  &64.31\scriptsize{$\pm$0.47} &$-$0.15\scriptsize{$\pm$0.02}  &63.67\scriptsize{$\pm$0.28} &$-$0.11\scriptsize{$\pm$0.01}  \\
					DER  &$>$200   &65.29\scriptsize{$\pm$1.01} & $-$0.16\scriptsize{$\pm$0.01}  &63.54\scriptsize{$\pm$0.97} &$-$0.18\scriptsize{$\pm$0.01}  \\
					CLDNet &115.82 &65.42\scriptsize{$\pm$0.36} &-  &64.98\scriptsize{$\pm$0.42} &-  \\ 
					MORE &120.08 &67.13\scriptsize{$\pm$1.03} &-  &66.95\scriptsize{$\pm$0.98} &-  \\  
					FS-DGPM &191.71 &67.54\scriptsize{$\pm$0.36} &$-$0.23\scriptsize{$\pm$0.02}   &68.45\scriptsize{$\pm$0.38} &$-$0.27\scriptsize{$\pm$0.04}  \\ 
					Ours &\textbf{114.26} &\textbf{69.65}\scriptsize{$\pm$0.14} & \multicolumn{1}{r}{\textbf{$-$0.01}\scriptsize{$\pm$0.01}} &\textbf{70.16}\scriptsize{$\pm$0.20} & $-$0.03\scriptsize{$\pm$0.01}  \\   \bottomrule
				\end{tabular}
			\end{sc}
		\end{small}
	\end{center}
	\vskip -0.1in
\end{table*}

\begin{table}[t]
	\caption{Comparison results on the split ImageNet-R dataset using pre-trained ViT. Buffer counts the number of exemplars saved for rehearsal. Forgetting (denoted by $\mathcal{F}$) is negatively correlated with BWT. All results except ours and CODA-Prompt~\cite{smith2023coda} are extracted from~\cite{wang2022dualprompt}. Note that the original CODA-Prompt uses an easier accuracy-related AIA metric than others and we have aligned it here.}
	\label{Table_IMR}
	\vskip 0.10in
	\begin{center}
		\begin{small}
			\begin{sc}
				\begin{tabular}{lccc}
					\toprule 
					Method & Buffer $\downarrow$ & ACA(\%) $\uparrow$ & $\mathcal{F}$ $\downarrow$ \\ \midrule
					ER & \multirow{5}{*}{1 000}& 55.13\scriptsize{$\pm$1.29} & 0.35\scriptsize{$\pm$0.52} \\
					BiC &  & 52.14\scriptsize{$\pm$1.08} & 0.37\scriptsize{$\pm$1.05} \\
					GDumb &  & 38.32\scriptsize{$\pm$0.55} & - \\ 
					DER++ &  & 55.47\scriptsize{$\pm$1.31} & 0.35\scriptsize{$\pm$1.50} \\
					Co$^2$L &  & 53.45\scriptsize{$\pm$1.55} & 0.37\scriptsize{$\pm$1.81} \\ \midrule
					L2P & \multirow{4}{*}{0} & 61.57\scriptsize{$\pm$0.66} & 0.10\scriptsize{$\pm$0.20} \\
					DualPrompt &  & 68.13\scriptsize{$\pm$0.49} & 0.05\scriptsize{$\pm$0.20} \\
					CODA-Prompt &  & 69.01\scriptsize{$\pm$0.55} & 0.05\scriptsize{$\pm$0.20} \\
					Ours$^\dag$ &  & \multicolumn{1}{l}{65.45\scriptsize{$\pm$0.85}} & 0.03\scriptsize{$\pm$0.20}~  \\
					Ours$^{\dag\dag}$ &  & \multicolumn{1}{l}{\textbf{70.32}\scriptsize{$\pm$0.55}} & \textbf{0.02}\scriptsize{$\pm$0.20}~  \\ \midrule
					Upper bound &- & 79.13\scriptsize{$\pm$0.18} &- \\ 
					\bottomrule
				\end{tabular}
			\end{sc}
		\end{small}
	\end{center}
	\vskip -0.15in
\end{table}

\textbf{MNIST-10/5 and FashionMNIST-10/5.} 
Table \ref{Table_MNIST_FMNIST} extensively compares different baselines on two standard benchmark datasets adapted for CIL.  Our AutoActivator provides strong CIL performance with respect to three measurements. (i) ACA: it achieves the best average accuracy of 97.32\% and 88.46\%, surpassing the second-best competitors by 1.81\% and 2.45\%, respectively; (ii) BWT: it behaves with almost zero forgetting during sequentially learning five 2-class tasks, like some work in the task-incremental learning (TIL) scenario where task identities are required to match specific masks~\cite{kang2022forget}; and (iii) Memory budget: our final \#model is slightly ($\sim$12\%) larger than the rehearsal-based ones but our method needs no \#exemplar buffers. These results indicate that AutoActivator well trades off between model accuracy and memory budget. This makes sense for real-world applications under privacy-sensitive and resource-limited CIL scenarios. Meanwhile, the steady standard deviations show that our method has strong task-order robustness, with similar results regardless of random orderings for 5 independent runs. 

\textbf{CIFAR-100/10 and CIFAR-100/25.}
Table \ref{Table_CIFAR} reports two more challenging task sequences evenly split by the widely-used visual benchmark CIFAR-100. Based on the empirical evaluation, our method achieves the best average classification accuracy of 69.65\% and 70.16\%, improving upon the second-best method by 2.11\% on CIFAR-100/10 and 1.71\% on CIFAR-100/25, respectively. Interestingly, although pre-training implicitly alleviates the effects of catastrophic forgetting, it is not necessarily translated to CIL performance. We observe that some well-known methods still suffer from a large forgetting given the pre-trained backbone compared with a learning-from-scratch paradigm, similar findings can be observed in~\cite{wang2022dualprompt}. In addition to average classification accuracy, our method outperforms the selected state-of-the-art methods on scale ratio, indicating that the network expansion is compactly commensurate with the intrinsic complexity of a task sequence. Again, the proposed method yields competitive standard deviations indicating the superiority task-order robustness.

\begin{figure*}[ht]
	\vskip -0.05in 
	\makeatletter\def\@captype{table}\makeatother
	\begin{minipage}{0.63\textwidth}
		\setlength{\tabcolsep}{1.2mm} 
		\caption{Parameter analysis of supervisory mechanisms. We report the ACA, the cumulative number of nodes (Nodes), and the whole running time (Time) per task sequence under different $l$ and $T_{max}$.}
		\label{Table_Additional_Analysis}
		\vskip -0.1in  
		\begin{center}
			\begin{small}
				\begin{sc}
					\scalebox{0.93}{
						\begin{tabular}{llcccccc}
							\toprule
							\multirow{2}{*}{\textbf{Step size}} & 
							\multirow{2}{*}{\textbf{$T_{max}$}} & 
							\multicolumn{3}{c}{\textbf{MNIST-10/5}} & \multicolumn{3}{c}{\textbf{FashionMNIST-10/5}} \\ 
							&  & ACA(\%)$\uparrow$ & Nodes$\downarrow$ & Time(s)$\downarrow$ & ACA(\%)$\uparrow$ & Nodes$\downarrow$ & Time(s)$\downarrow$ \\ \midrule
							\multirow{4}{*}{\begin{tabular}[c]{@{}l@{}}$l=1$\end{tabular}} 
							&1   &96.86  &692  &12.87  &88.22  &702  &13.13  \\
							&10  &97.08  &640  &18.38  &88.32  &623  &16.26  \\
							&50  &97.30  &586  &28.28  &88.54  &574  &30.50  \\
							&200 &97.31  &532  &62.45  &88.53  &512  &48.99  \\ \midrule
							\multirow{4}{*}{\begin{tabular}[c]{@{}l@{}}$l=10$\\\end{tabular}} 
							&1   &96.82  &700  &6.76   &88.19  &720  &6.83  \\
							&10  &97.05  &680  &10.92  &88.37  &680  &13.39  \\
							&50  &97.24  &670  &19.74  &88.51  &660  &17.95  \\
							&200 &97.21  &650  &42.83  &88.52  &650  &33.40   \\
							\bottomrule
						\end{tabular}
					}
				\end{sc}
			\end{small}
		\end{center}
		\vskip 0.1in 
	\end{minipage}
	\hspace{0.2in} 
	\makeatletter\def\@captype{figure}\makeatother
	\begin{minipage}{0.33\textwidth}
		\setlength{\tabcolsep}{0.8mm} 
		\begin{center}
			\centerline{\includegraphics[width=0.85\columnwidth]{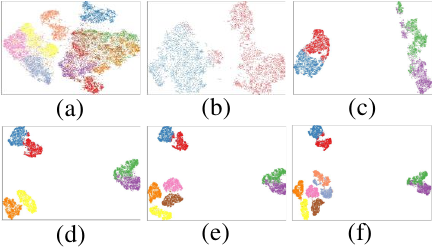}}
			\caption{t-SNE visualization where each color represents a class. (a) Mixed raw sample space based on the training data of ten classes as a reference. (b)-(f) Well-clustered representation space based on neural units' outputs after learning two classes per session.  }
			\label{Visualization}
		\end{center}
		\vskip -0.3in
		\vskip 0.1in
	\end{minipage}
	\vspace{-0.13in}
\end{figure*}

\textbf{ImageNet-R-200/10.} 
Table~\ref{Table_IMR} records the performance of compared methods starting from the same ImageNet pre-trained ViT-B/16 for the split ImageNet-R sequence. This yields two versions of our system: Ours$^\dag$ refers to only one pre-trained ViT being used; Ours$^{\dag\dag}$ refers to attaching complementary prompts to two pre-trained ViT, resembling what DualPrompt learns two sets of disjoint prompt spaces but are untouched during CIL process. We observe that it is still particularly challenging for rehearsal-based methods with a buffer size of 1 000. By contrast, there is an obvious gain for the recent emerging prompt-based methods, even though they are rehearsal-free. It is worth mentioning that Ours$^{\dag\dag}$ outperforms the strongest competitors DualPrompt and CODA-Prompt by 2.19\% and 1.31\%, respectively. This implies that prompt-based methods have not come close to exhausting their capacity for accuracy, e.g., combining with our method for a marginal boost. The proposed connectionist model, which tailors neural unit dynamics with its convergence property theoretically guaranteed, can thus serve as a general CIL classifier. Meanwhile, our method shows competitive performance on the metric Forgetting. In a nutshell, prompt-based methods exhibit a wise utilization of pre-trained transformer-based backbones while our AutoActivator seeks an effective and scalable implementation. 
\subsection{Analysis of Modifying Neural Unit Dynamics}
We now conduct an extensive empirical investigation to display the effectiveness of modules in the proposed novel connectionist model. This also covers the ablation study and parameter analysis, which pertains to each component serving for CIL. We first investigate two key components in supervisory mechanisms through the lens of CIL: (1) step size $l$ for recruiting node(s) each time---we refer to $l = 1$ as a one-by-one version and $l \geq 2$ as a batch-by-batch version; and (2) the maximum times of random generation $T_{max}$---it determines the number of attempts for mining candidate nodes that meet with a supervisory mechanism. 

\begin{table}[t]
	\caption{Influence of expected accuracy in supervisory mechanisms. We report the neural unit size per task ($t=1,2,\dots,5$) and the average number of parameters per class.}
	\label{Table_Ab1}
	\begin{center}
		\begin{small}
			\begin{sc}
				\begin{tabular}{ccccccc}
					\toprule
					$R(t)$ & 1 & 2 & 3 & 4 & 5 & \#param (M)  \\  \midrule
					99\%  &200  &200  &50  &30  &200  &0.0535    \\
					98\%  &200  &200  &20  &20  &100  &0.0425   \\
					95\%  &20   &70   &20  &10  &20   &0.0110  \\ 
					\bottomrule
				\end{tabular}
			\end{sc}
		\end{small}
	\end{center}
	\vskip -0.20in
\end{table}

\begin{table}[t]
	\caption{Effectiveness of reactivation process, measured by the ACA(\%) under different components.}
	\label{Table_Ab2}
	\begin{center}
		\begin{small}
			\begin{sc}
				\begin{tabular}{lccc}
					\toprule
					Component I  & \ding{55} & \ding{51} & \ding{51} \\
					Component II  & \ding{55} & \ding{55} & \ding{51} \\ \midrule
					MNIST-10/5 & 13.69 & 97.18 & \textbf{97.21} \\
					FashionMNIST-10/5 & 22.05 & 86.46 & \textbf{88.37} \\
					CIFAR-100/10 & 5.56 & 69.21 & \textbf{69.27} \\
					ImageNet-R & 4.98 & 63.25 & \textbf{65.83}  \\ 
					\bottomrule
				\end{tabular}
			\end{sc}
		\end{small}
	\end{center}
	\vskip -0.20in
\end{table}

\textbf{One-by-One v.s. Batch-by-Batch Version.}  It can be observed from Table~\ref{Table_Additional_Analysis} that both versions could achieve the same-level results. However, under the equal $T_{max}$ setting, the one-by-one version ($l=1$) shows superiority in expanding more compact neural units as adding only one node each time is potentially more targeted; By contrast, the batch-by-batch version ($l=10$) could significantly reduce the time requirements. It is worth mentioning that the results of both versions under proper $T_{max}$ (e.g., $T_{max} = 50$ used in our experiments) are at an acceptable level.

\textbf{The Maximum Times of Random Generation.} 
Also from Table~\ref{Table_Additional_Analysis}, we observe that a larger $T_{max}$ contributes to shrinking the Nodes while slightly improving the ACA without sacrificing the Time too much. However, an excessively high value would make the construction of the candidate pool time-consuming, while a too little one would give rise to learning instability for failure in ﬁnding nodes that are useful enough. Hence, $T_{max}$ is related to both opportunity and efficiency in the process of node generation.  

\textbf{Representation Learning of Neural Units.}
We also visualize the representation learning of scalable neural units. Fig. \ref{Visualization} depicts the t-SNE visualization~\cite{van2008visualizing} of the raw sample space and the output representations of neural units. We observe that the same classes are well clustered while different classes are properly separated. Therefore, the representation learning of neural units could provide useful information for decision-making, i.e., to adjust the decision boundary for all classes simultaneously, and facilitate the dynamic stepwise update of output weights in the least-square sense.

\textbf{The Actual Expansion Quota.} 
To indicate the expansion quota on given problems, we gather the neural unit size and count the number of parameters (\#param). Taking FashionMNIST-10/5 as an example, we specify the maximum number of nodes $L_{max}(t) = 200$ as one of the termination criteria and vary another termed expected accuracy $R(t)$ for node expansion. Table \ref{Table_Ab1} shows that the final expansion quota is commensurate with the intrinsic complexity of every task, e.g., only $30$ nodes are recruited for Task 4 under $R(t) = 99\%$. Meanwhile, the node expansion slightly introduced additional parameters, about 0.0535 M for each class, which only equals the level of replaying 7 exemplars.

\textbf{Ablation Study of Activation Thresholds.} We then validate the effectiveness of reactivation for defying the inter-class confusion in CIL. Note that component I by Eq. (\ref{threshold_t}) rectifies inter-class confusion and component II by Eq. (\ref{threshold_dis}) calibrates the results. Table~\ref{Table_Ab2} shows that our model would suffer serious forgetting without them. By contrast, using Component I could tackle this issue in most cases, together with Component II for a marginal boost. In particular, the latter works well for the case of substantial intra-class variability, demonstrating that AutoActivator could exactly reactivate the involved neural units without recourse to task identities. More insights and analysis are given in Appendix~\ref{Additional_Analysis}. 

\section{Conclusion}
We propose to harness neural unit dynamics for efficient and scalable CIL. Unlike most architecture-based methods whose expansion criterion relies on changes of the loss, which lacks theoretical guarantees, the supervisory mechanism narrows the gap using a universal approximation theorem. The reactivation paradigm pioneered is biologically plausible, and we believe this has a lot of potential room for exploration. Sufficient comparison experiments and ablation analysis display the effectiveness of our model. Other interesting investigations that we leave for future work include combining our approach with class-imbalance sequences, which may benefit from an AutoActivator-like algorithm.  


%
%
\section*{Acknowledgements}
This work was supported by the National Natural Science Foundation of China under Grants 623B2040, U1913602, and 61936004, and the 111 Project on Computational Intelligence and Intelligent Control under Grant B18024.

\section*{Impact Statement}

%
%

This paper presents work whose goal is to advance the field of General Machine Learning. We would like to discuss the potential societal consequences that may result from the limitation of our work. The supervised learning dataset we used by following most existing CIL methods involves the availability of a task boundary during training. From a practical perspective, however, it can be a restrictive assumption and somewhat not be real-world applicable. We leave further study (e.g., novel class discovery technique) on this as future work.
Besides, we assume the availability of a pre-trained backbone. While this assumption is widely accepted, as pre-trained models have become a common asset in advanced vision communities, it is important to consider that biases and fairness issues present in the original model may persist during the CIL process. To mitigate the bias and fairness issues, we strongly recommend that users conduct a thorough examination of the pre-trained models.


\bibliography{example_paper}
\bibliographystyle{icml2024}

\newpage
\appendix
\onecolumn
%
%


\section{Proof of Theorem \ref{Theorem_2}} \label{Proof_Theorem_1}
\begin{proof} 
	First, we deduce the intermediate values w.r.t. output weights $\beta_l(t)$, the counterpart of Eq. (\ref{Bete_L}) in Proposition \ref{Proposition_1}.
	\begin{equation} \label{Deduction_output_weights}
		\begin{split}
			\mathcal{L} & = \Vert { f - { f_L}} \Vert^2 \\
			& = \Vert  f - { f_{L-l}} - ( g_{L-l+1} \beta_{L-l+1}(t) + \dots +  g_L \beta_L(t)) \Vert^2\\
			& = \Vert  e_{L-l}(t) - [ g_{L-l+1}, \dots , g_L][  \beta _{L-l+1}(t), \dots , \beta_L(t)]^\mathrm{T} \Vert^2 \\
			& = \Vert  e_{L-l}(t) -  G_{l} \beta_{l}(t) \Vert^2 \\
			& = \sum_{c = 1}^{C_t} \langle  e_{L-l,c}(t) -  G_{l} \beta_{l,c}(t),  e_{L-l,c}(t) -  G_{l} \beta _{l,c}(t) \rangle \\
			& = \sum_{c = 1}^{C_t} \big( \langle  e_{L-l,c}(t),  e_{L-l,c}(t) \rangle - 2\langle  e_{L-l,c}(t),  G_{l} \beta _{l,c}(t) \rangle + \langle  G_{l} \beta_{l,c}(t),  G_{l} \beta_{l,c}(t)\rangle \big)  \\
			& = \Vert  e_{L-l}(t) \Vert^2 - \sum_{c=1}^{C_t} 2\langle  e_{L-l,c}(t),  G_{l} \beta_{l,c}(t) \rangle + \sum_{c=1}^{C_t} \langle  G_{l} \beta_{l,c}(t),  G_{l} \beta_{l,c}(t) \rangle
		\end{split}
	\end{equation}
	\noindent where $ G_l=[ g_{L-l+1},\dots, g_L]$, $ \beta_l(t)=[  \beta _{L-l+1}(t), \dots , \beta_L(t)]^\mathrm{T}$, and $ e_{L-l}(t)= f_L -  f_{L-l}$.
	
	Take the derivative of Eq. (\ref{Deduction_output_weights}) w.r.t. $ \beta_{l,c}(t)$, we have
	\begin{equation}\label{Derivative}
		\frac{\partial \mathcal{L}}{\partial  \beta_{l,c}(t)} = -2  G_l^\mathrm{T} e_{L-l,c}(t) + 2  G_l^\mathrm{T}  G_l  \beta_{l,c}(t)
	\end{equation}
	\noindent By setting Eq. (\ref{Derivative}) to zero, we have
	\begin{equation}\label{intermediate_output_weights}
		\beta_{l,c}(t) = ( G_l^\mathrm{T}  G_l)^\dag  G_l^\mathrm{T}  e_{L-l,c}(t)
	\end{equation}
	Then, denote by $ \beta_l(t) = [ \beta_{l,1}, \dots,  \beta_{l,C_t}]$ and $ e_{L}(t) = e_{L-l}^\star(t) -  G_l \beta_l(t)$ the intermediate values. Given optimal results $[{\beta}_1^\star(t), \dots, {\beta}_L^\star(t)]$ by Eq. (\ref{beta_LSM_2}), let $ e_{L}^\star(t) =  f - \sum_{j=1}^{L}\beta_j^\star(t) g_j$ ($ e_{0}^\star(t) =  f$). For the progression $\Vert  e_{L}^\star(t) \Vert^2$ , we have 
	\begin{equation}
		\begin{split}
			\Vert  e_{L}^\star(t) \Vert^2 &\leq \Vert  e_{L}(t)\Vert^2\\
			&=\langle e_{L-l}^\star(t) -  G_{l} \beta_l(t),  e_{L-l}^\star(t) -  G_{l} \beta_l(t)\rangle \\
			&\leq \Vert  e_{L-l}^\star(t)\Vert^2 - \Vert  G_{l} \beta_l(t)\Vert^2\\
			&\leq \Vert  e_{L-l}^\star(t) \Vert^2
		\end{split}
	\end{equation}
	We note that the progression $\Vert e_{L}^\star(t) \Vert^2$ is monotonically decreasing. Using Eqs. (\ref{delta_2}-\ref{Inequalities_2}), we can further obtain
	\begin{equation}
		\begin{split}
			&\Vert  e_{L}^\star(t) \Vert^2 - (r(t) + \mu_L(t)) \Vert  e_{L-l}^\star(t)\Vert^2 \\
			\leq & \Vert  e_{L}(t)\Vert^2 - (r(t) + \mu_L(t)) \Vert  e_{L-l}^\star(t)\Vert^2 \\
			= & \sum_{c=1}^{C_t}\langle e_{L-l,c}^\star(t) -  G_{l} \beta_{l,c}(t),  e_{L-l,c}^\star(t) -  G_{l} \beta_{l,c}(t)\rangle
			- \sum_{c=1}^{C_t} (r(t) + \mu_L(t)) \langle e_{L-l,c}^\star(t),  e_{L-l,c}^\star(t) \rangle \\
			= & \sum_{c=1}^{C_t}\big((1 - r(t) - \mu_L(t)) \langle e_{L-l,c}^\star(t),  e_{L-l,c}^\star(t) \rangle
			- \sum_{c=1}^{C_t}2\langle e_{L-l,c}^\star(t),   G_{l} \beta_{l,c}(t)\rangle + \sum_{c=1}^{C_t}\langle  G_{l} \beta_{l,c}(t), G_{l} \beta_{l,c}(t)\rangle \\
			= & \sum_{c=1}^{C_t}\delta_{L,c}^\star(t) \Vert  e_{L-l}^\star(t)\Vert^2 -  e_{L-l,c}^{\star\mathrm{T}}(t) G_l( G_l^\mathrm{T}  G_l)^\dag  G_l^\mathrm{T}  e_{L-l,c}^\star(t) \\
			\leq & \sum_{c=1}^{C_t} \delta_{L,c}^\star(t) - \langle e_{L-l,c}^\star(t),  G_l \beta_{l,c}(t) \rangle \\
			\leq & 0   
		\end{split}
	\end{equation}
	\noindent Therefore, we have $\Vert  e_{L}^\star(t) \Vert^2 \leq (r(t) + \mu_L(t)) \Vert  e_{L-l}^\star(t)\Vert^2$. Note that $0 < r(t) < 1$ and $\lim_{L \to +\infty }\mu_L(t) = 0$, i.e., $\lim_{L \to +\infty }\Vert  e_{L}^\star(t) \Vert = 0$. This completes the proof. 
\end{proof}

\section{Algorithm description for AutoActivator}
\label{alg_1}
To better illustrate our method, the whole procedure of training and testing is provided in Algorithm~\ref{Alg_AutoActivator}. We now show the expansion process from the perspective of matrix/vector multiplication. Note that nodes added to each scalable neural unit are in a fully-connected manner but different neural units on a certain layer are not fully connected to the next layer (see Figure~\ref{Overview}), for example, their weights are stored in a list specific to a certain layer. This is significantly different from a fully-connected (FC) layer or multi-layer perceptron (MLP) in that our classifier layer is additionally parameterized by the activation threshold for expansion-based decision-making, i.e., reactivating the required neural units in different lists. 

\begin{algorithm}[htbp]
	\caption{ActoActivator Training and Test Algorithm}
	\label{Alg_AutoActivator}
	\textbf{Input}: Datasets $\{{D}_t\}_{t=1}^T$; Termination criteria of expanding hidden unit $t$: the maximum number of nodes $L_{max}(t)$ and expected accuracy $R(t)$ 
	\begin{algorithmic}[1] 
		\STATE \textbf{\textit{\# During Sequential Training}} \\
		\FOR{$t=1, 2,\dots,T$}
		\WHILE{None of the termination criteria is satisfied}
		\STATE Recruit randomly generated nodes based on supervisory mechanisms by Eq. (\ref{xi_2})
		\STATE Update output weights by Eq. (\ref{W_L+1})
		\ENDWHILE
		\STATE Compute activation thresholds by Eq. (\ref{threshold_t}) \\
		\ENDFOR \\
		
		\STATE \textbf{\textit{\# At Inference Time}} \\
		\STATE Given batches of test instances from a certain task
		\STATE Reactivate the required hidden units by Eq. (\ref{threshold_dis}) \\
	\end{algorithmic}
	\textbf{Output}: Task identity $t$ and the predicted classes $c$
\end{algorithm}

In our algorithm, we compactly expand the network on multiple layers. Among them, the former layers are built under the guidance of supervisory mechanism (Section~\ref{Sec_Supervisory_Mechanisms}) while the final classifier layer is step-wise updated by close-formed solutions (Section~\ref{Sec_Reactivation}). Importantly, when it comes to our expansion process, the final classifier layer together with at least one (or multiple) former layer(s) is required. This is because the mentioned two different types of layers work together to complete the inference/forward pass in our algorithm.

Without the loss of generality, we take a two-layer AutoActivator on the split MNIST (five 2-classification tasks) as an example. For the first task whose input has 784 dimensions, we start with modeling the neural unit from scratch with step size $l$ nodes (e.g., $l=10$) that are randomly generated but recruited for expansion under the guidance of supervisory mechanism. When it meets the termination criteria, it yields input weight matrix $W_{in} \in R^{784 \times L_1}$ (e.g., $L_1=200$) from the former layer and output weight matrix $W_{out}  \in R^{L_1 \times 2}$ from the classifier layer. These two matrices are respectively saved in two lists corresponding to two different types of layers, which are also bound to an activation threshold (denoted by $threshold(1)$). For the second task whose input has 784 dimensions, similarly, we start with expanding the former layer by progressively recruiting additional $l$ nodes (e.g., $l=10$). When it meets the termination criteria,  it yields the newly added input weight matrix $W_{in}^\prime  \in R^{784 \times L_2}$ (e.g., $L_2=150$) for the former layer and output weight matrix $W_{out}^\prime   \in R^{L_2 \times 2}$ for the classifier layer. These two matrices are bound to an activation threshold (denoted by $threshold(2)$). The final weight of the former layer is kept in a list $[W_{in}, W_{in}^\prime]$ without row/column-wise concatenation, and the same goes for the final weight of the classifier layer that is kept in a list $[W_{out}, W_{out}^\prime]$. In this case, the final dimension of the weight in that former layer is composed of separate $784 \times L_1$ and $784 \times L_2$, instead of $(784+784) \times (L_1+L_2)$; the final dimension of the weight in that classifier layer is composed of separate $L_1 \times 2$ and $L_2 \times 2$, instead of $(L_1+L_2) \times (2+2)$. Given the input feature vector from either of trained tasks has $m=784$ dimensions, only the input-output weight matrix pair that achieves its activation threshold serves for making the decision. Thus the size of output vector is 2 dimensional with remaining deactivated. Note that since the size of $L_1$ and $L_2$ is adaptively decided by the complexity/difficulty of a given task, which are usually different, the input feature vector (of size $m$) should be the original input from that task, e.g., $m=784$ in the task sequence. This learning paradigm is dramatically different from a multi-head classifier
. Our theoretical contribution guarantees the model’s convergence property on learning sequential mappings. The above also works for multi-layer AutoActivator. Additionally, AutoActivator can be extended to convolution modules or injected into some advanced backbones such as ResNet or ViT, which accommodates the real-world scenario where pre-training is usually involved as an initial step.

\section{Additional implementation details} \label{Training_Details}
\subsection{Data splits and architectures} 
\label{DS_A}
We run experiments on extensive datasets adapted for CIL under different widely used backbones, which are implemented in PyTorch with NVIDIA RTX 3080-Ti GPUs. For fair comparisons, (i) all methods select the same or similar-sized neural network architectures; (ii) following the settings in~\cite{ke2021achieving, AAAI2021PCL, rios2022incdfm, NMI2019OWM, wu2022class}, all methods allow to start from the same pre-training unless otherwise specified; and (iii) the data for pre-training can not include in that of CIL, e.g., we manually remove the overlapping classes. The resulting data splits and architectures used in our experiments are shown in Table~\ref{Table_Datasplits}.

\begin{table}[ht]
	\caption{Details of the data splits and the selected architectures for pre-training and CIL. These are what we exactly used in our experiments of the main text unless otherwise specified.}
	\label{Table_Datasplits}
	\vskip 0.15in
	\begin{center}
		\begin{small}
			\begin{sc}
				\begin{tabular}{lccc}
					\toprule
					\multirow{2}{*}{\textbf{Dataset}} & \multirow{2}{*}{\textbf{Archirecture}} & \multicolumn{2}{c}{\textbf{Data split}} \\ \cmidrule{3-4}
					&  & Pre-training & CIL \\ \midrule
					MNIST & \multirow{2}{*}{MLP} & \multirow{2}{*}{\ding{55}} & MNIST-10/5  \\
					FashionMNIST &  &  & FashionMNIST-10/5   \\ \midrule
					
					CIFAR-100 & ResNet-18 & Tiny-ImageNet & CIFAR-100/10, CIFAR-100/25 \\ 
					ImageNet-R & ViT-B/16 &ImageNet  &ImageNet-R-200/10 \\
					\bottomrule
				\end{tabular}
			\end{sc}
		\end{small}
	\end{center}
	\vskip -0.1in
\end{table}

\subsection{Hyper-parameter} 
\label{Hyper-parameter}
We carefully reproduce the selected baselines and use the hyper-parameter settings by referring to their original source code. When conducting experiments with different datasets, we keep about 10\% of the training data from each task for validation. With regard to baselines, we use the SGD optimizer with an initial learning rate (0.001 for MNIST, FashionMNIST; 0.01 for the remaining) and do much tuning. For the rehearsal-based approaches, we keep a random exemplar set of 2k per task sequence or $\sim$200 per class by following the similar setting in~\cite{NIPS2019RPS-Net, hsu2018re}. For regularization-based approaches, the penalty coefficient is from set \{100, 1 000, 10 000, 100 000\}. For architecture-based approaches, we pay attention to their model size after learning all tasks. In our method, the maximum number of nodes $L_{max}(t)$ and expected accuracy $R(t)$ of neural unit $t$ for task $t$ are problem-dependent and not fixed. Instead, we perform the early termination criteria at the level of node expansion instead of epochs~\cite{serra2018overcoming} or phases~\cite{gurbuz2022nispa}, by tracking the lowest value of the residual error achieved so far on the validation set. This way is flexible in determining the most appropriate hyper-parameter settings without over-ﬁtting, e.g., for MNIST-10/5 and FashionMNIST-10/5, we use $L_{max}(t) = 200$ and $R(t) = 99\%$; for CIFAR-100, we use $L_{max}(t) = 1 000$ and $R(t) = 90\%$ (CIFAR-100/10), and $L_{max}(t) = 500$ and $R(t) = 80\%$ (CIFAR-100/25). Similarly, we empirically set the step size $l = 10$ for node expansion each time and the maximum times of random generation $T_{max}=50$ (see Table \ref{Table_Additional_Analysis} for more details); $r(t)=0.9$ and $\mu_L(t)=\frac{1-r(t)}{L+1}$ based on Theorem \ref{Theorem_2}.

To ensure a fair comparison, when in the absence of pre-training, every comparison method uses a similar-sized neural network architecture that is fully trainable; when in the presence of pre-training, every comparison method starts from the same pre-trained backbone as a base session. Then, following the settings in \cite{AAAI2021PCL, NMI2019OWM, hayes2020remind, ke2021achieving, wang2022dualprompt}, we keep the pre-trained model untouched for methods such as PCL and ours or still make it fully trainable for methods that could
not learn well with a frozen backbone, as evaluated in~\cite{wang2022dualprompt}. That is, all methods allow to start with the same pre-training or learning from scratch and only the winning results are reported.

\subsection{Metrics} 
\label{Metrics}
We evaluate all considered baselines based on the following metrics (higher is better): \textbf{Average Classification Accuracy (ACA)}, i.e., $\text{ACA}^T$, measures the test performance of the final model at hand on all $T$ tasks seen so far: 
\begin{equation}\label{ACA}
	\text{ACA}=\frac{1}{T}\sum_{t=1}^T R_{T,t}
\end{equation}
\noindent where $R_{T,t}$ is the test accuracy for task $t$ after training on task $T$; \textbf{Average Incremental Accuracy (AIA)} involves the intermediate results as each step/session proceeds, say 
\begin{equation}\label{AIA}
	\text{AIA} = \frac{1}{T}\sum_{t=1}^T ACA^t
\end{equation}
Among them, it is typically $ACA^1 > ACA^2 > \dots > ACA^t > \dots > ACA^T$. Therefore, ACA is more challenging than AIA, and our experiments focus more on the former while using the latter for a direct comparison with certain of competitors.
\textbf{Backward Transfer (BWT)}~\cite{NIPS2017GEM} indicates a model's ability in knowledge retention, averaged over all tasks:
\begin{equation}\label{BWT}
	\text{BWT}=\frac{1}{T-1}\sum_{t=1}^{T-1} R_{T,t}-R_{t,t}
\end{equation}
\noindent Negative BWT values mean that learning new tasks causes forgetting past tasks while a model with BWT = 0 can be considered forgetting-free~\cite{kang2022forget}. 

As an additional metric (fewer is better), we report the \textbf{Memory Budget} by aligning the memory cost of network parameters and old samples, i.e., switching them to a 32-bit floating number. In this way, both the final model size (\#model, MB) and exemplar buffers (\#exemplar, MB) are counted into the memory budget (MB), calculated with an approximate summation of them. For example, the budget for saving a simple 2-layer MLP (with [784-400-400-10] neurons) converts to 478 410 floats$\times$4 bytes/float $\div$ (1 024$\times$1 024) bytes $\approx$ 1.82 MB; the budget for saving 2k samples (with 28×28 gray-scaled pixels) converts to 1 568 000 floats$\times$ 4 bytes/float $\div$ (1 024$\times$1 024) bytes $\approx 5.98$ MB, as reported in Table~\ref{Table_MNIST_FMNIST}.

\section{Additional Comparison Results} \label{Additional_Analysis}

\subsection{Experiments on Unevenly Split Task Sequence}
Unlike the intra-sequence balanced CIL where a dataset with $C$ classes evenly divided into $T$ tasks, e.g., MNIST-10/5 and CIFAR-100/10, we now preliminarily investigate the \textit{intra-sequence imbalanced CIL} where a dataset with $C$ classes are unevenly split into $C_t$ classes---CIFAR-\{100($C_t$)$|C_t \neq C_{t+1}$\}. This yields CIFAR-\{100(10), 100(20), 100(30), \dots\} and CIFAR-\{100(2), 100(4), $\dots$\}, in which the value in parentheses indicates the number of classes for the current task. Figure~\ref{Figure_CIFAR} evaluates different algorithms under such a more realistic CIL where tasks are not evenly split. Compared with Table \ref{Table_CIFAR}, almost all the methods experience performance degradation since the intra-sequence imbalanced case has something to do with the choice of architecture or expansion. This is reflected in the different numbers of both classes and samples within each task. In the resulting 4-task and 10-task sequences, our method gets the best ACA and shows absolute superiority in BWT values. These results demonstrate that the proposed method introduces a supervisory mechanism to guide network expansion, whose growth size fully considers the intrinsic complexity of each task sequentially presented.

\begin{figure}[ht]
	\begin{center}		\centerline{\includegraphics[width=0.5\columnwidth]{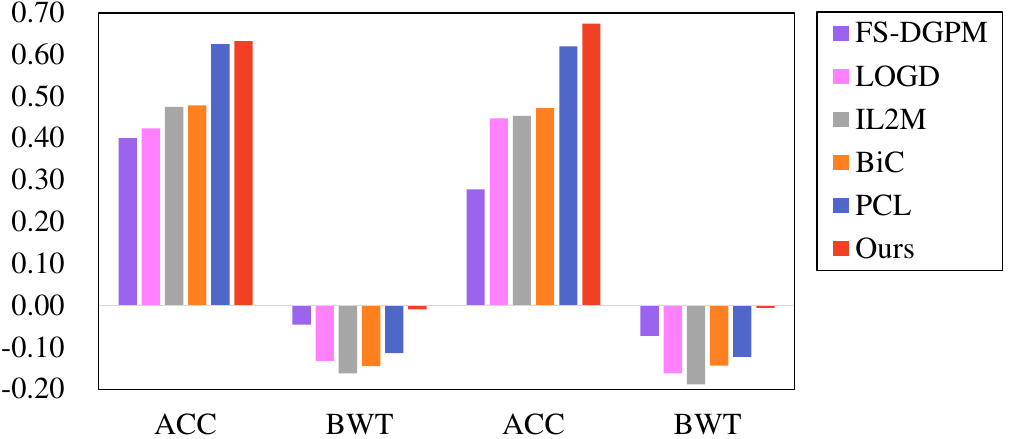}}
		\caption{Performance comparison on the intra-sequence imbalanced case. \textit{Left:} CIFAR-\{100(10), 100(20), 100(30), 100(40)\}; \textit{Right:} CIFAR-\{100(2), 100(4), 100(6), $\dots$\}.}
		\label{Figure_CIFAR}
	\end{center}
	\vskip -0.2in
\end{figure}

\subsection{Experiments on the Scalability for CIL}
To show the scalability of our method, we further display the changes of model sizes during sequential training. All methods are built upon the same backbone architecture ResNet-18 ($\sim$44.6MB). Note that RPS-Net and MORE need extra exemplar buffers. We report the actually involved model size (i.e., paths) for RPS-Net. It can be observed from Figure~\ref{model_tasks} that the proposed method outperforms the competitors. That is, after incrementally learning 10 tasks, it remains the model size relatively unchanged, without relying on exemplar buffers. We believe this is promising for practical CIL under resource-limited and privacy-sensitive scenarios.   

\begin{figure}[ht]
	\begin{center}		\centerline{\includegraphics[width=0.35\columnwidth]{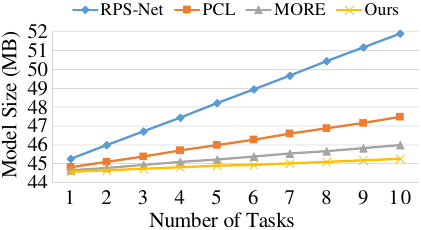}}
		\caption{Growth of the network as the number of tasks increases for CIFAR-100/10.}
		\label{model_tasks}
	\end{center}
	\vskip -0.2in
\end{figure}

\subsection{Experiments on the Training Time and Computational Costs}
As the proposed method uses a supervisory mechanism-based node recruitment step followed by a weight update step (where matrix inversion is needed), it is natural to think of the training time cost, e.g., comparing it with simpler rehearsal-based methods. Since different methods have very different requirements in computational burden, Table~\ref{Table_time} records their per-epoch running time, demonstrating our superiority in developing a fast and easy-implementation CIL model under selected parameters ($l=10$, $T_{max}=50$). As our work uses a dynamic network architecture, we also provide the computational costs measured by floating point operations per second (FLOPS). The calculation of FLOPs is affected by the size of the network and input. For a fair comparison, we follow the exact settings in AANet~\cite{liu2021adaptive}, with ResNet-32 for $32\times 32$ CIFAR-100 and ResNet-18 for $224\times 224$ ImageNet-Subset. The results yielded from ours and AANets are 70.12 M (1.82 G) and 140.00 M (3.64 G) on CIFAR-100 (ImageNet-Subset), respectively. This demonstrates the effectiveness and efficiency of our method in constructing dynamic network architecture for CIL.

\begin{table}[ht]
	\caption{Comparison results with rehearsal-based methods on running time per epoch.}
	\label{Table_time}
	\vskip 0.15in
	\begin{center}
	\begin{small}
	\begin{sc}
 	\begin{tabular}{lcccccc}
		\toprule
		Method & GEM & LOGD & RPS-Net & IL2M & FS-DGPM  & Ours  \\ \midrule
		Time (s) & 48 & 333 & 56 & 50 & 63  & 12 \\ \bottomrule 
	\end{tabular}
	\end{sc}
	\end{small}
	\end{center}
	\vskip -0.10in
\end{table}

\subsection{Experiments on Single CIL and Online CIL}
In addition to the standard CIL scenario, here we consider another two variants. One is single-class incremental learning that learns one class at a time. This is the most common case in practice because once a new class is encountered, we want to learn it immediately rather than wait for a few new classes to occur and learn them together. Following the setting in PCL~\cite{AAAI2021PCL}, Table~\ref{Table_varients} reports the comparison results for the Single CIL. In this scenario, the results except ours are drawn from that of PCL. It's worth mentioning that PCL, namely per-class continual learning, particularly excels at class-incremental learning one-by-one. For the CIFAR-100/100 (100 tasks), PCL yields 63.61\% and the proposed method is superior to it by 1.87\%. 

Another is Online CIL, in which a model learns new classes continually and data can only be observed once. In this scenario, the results except ours are taken from their original papers and the four competitors need the exemplar buffer. It is observed from Table~\ref{Table_varients} that our method outperforms ASER~\cite{shim2021online}, PRC~\cite{lin2023pcr}, OnPro~\cite{wei2023online}, GSA~\cite{guo2023dealing} on CIFAR-100/10 with fewer memory (buffer) usage. This demonstrates the effectiveness of the proposed method in the Online CIL scenario.

\begin{table}[ht]
	\caption{Comparison results on the CIFAR-100/100 for Single CIL and CIFAR-100/10 for Online CIL.}
	\label{Table_varients}
	\vskip 0.15in
	\begin{center}
	\begin{small}
	\begin{sc}
		\begin{tabular}{llll}
			\toprule
			\multicolumn{2}{c}{\textbf{Single CIL}} & \multicolumn{2}{c}{\textbf{Online CIL}}     \\
			Method          & ACA(\%) $\uparrow$          & Method & ACA(\%) $\uparrow$                       \\ \midrule
			EWC             & 2.93         & ASER   & 14.0                      \\
			RPS-Net         & 4.13         & PCR    & 25.6                      \\
			OWM             & 63.26        & OnPro  & 30.4                      \\
			PCL             & 63.61        & GSA    & 31.4                      \\
			Ours            & 65.48        & Ours   & 32.5 \\ \bottomrule
		\end{tabular}
		\end{sc}
	\end{small}
	\end{center}
	\vskip -0.10in
\end{table}

\subsection{Comparison Results When Using More Additional Parameters}

In the main text, Table~\ref{Table_IMR} adopts a ViT-B/16 transformer model pre-trained on ImageNet-21K and then incrementally learns tasks from the well-established 200-class Split Imagenet-R. In this setting, our method outperforms the selected baselines by only incurring 0.6 ($\times 10^6$) additional number of parameters (Additional No. Params) except for the pre-trained ViT. Now we further compare our method with competitors using more Additional No. Params. When more additional parameters are built upon a pre-trained model, strong baselines like SLCA~\cite{zhang2023slca} and RanPAC~\cite{mcdonnell2023ranpac} show better performance than that of all methods in Table~\ref{Table_IMR}. In Table~\ref{Table_ANP}, when our method enlarges its model size (Additional No. Params is about 3.9$\times 10^6$), denoted by Ours$^*$, its ACA is very approaching SLCA and RanPAC. Since our method belongs to the theoretical and embryonic approach, highlighting a fair comparison by considering both accuracy and memory usage, the result is still promising. Furthermore, as SLAC can be naturally plug-and-play with other CIL approaches, we make this combination with Ours$^*$, denoted by Ours$^{**}$. We can achieve extra performance improvement, and the results surpass SLCA and RanPAC by 1.91\% and 1.01\%, surprisingly getting closer to the upper bound.

\begin{table}[t]
	\caption{Comparison results when more additional parameters are built upon a pre-trained model.}
	\label{Table_ANP}
	\vskip 0.15in
	\begin{center}
	\begin{small}
	\begin{sc}
		\begin{tabular}{lcc}
			\toprule
			Method & ACA(\%) $\uparrow$ & Additional No. Params ($\times 10^6$) $\downarrow$ \\ \midrule
			SLAC & 77.00 & $\sim$123.7 \\
			RanPAC & 77.90 & $\sim$12.5 \\
			Ours$^*$ & 76.32 & $\sim$3.90 \\
			Ours$^{**}$ & 78.91 & $\sim$127.5 \\
			Upper bound & 79.13 & - \\ \bottomrule
		\end{tabular}
		\end{sc}
	\end{small}
	\end{center}
	\vskip -0.10in
\end{table}

\subsection{Further Comparison on ImageNet-100 and ImageNet-1K Using the Metric AIA}
Before concluding our empirical evaluation, Table~\ref{Table_AIA} provides a comparison between the proposed method and some top-performing methods on ImageNet-100 and ImageNet-1K, measured by AIA. We first test our methods on ImageNet-100. Following the benchmark protocol used in PODNet~\cite{douillard2020podnet}, AANets~\cite{liu2021adaptive}, DER\cite{yan2021dynamically}, FOSTER~\cite{wang2022foster}, and ACIL~\cite{zhuang2022acil}, we start from a model trained on 50 base classes (B50), and the remaining 50 classes are divided into splits of 10 steps. 
Note that we directly take the reported results on ImageNet-100 B50 from their original papers to report the best performance. It can be observed that our method is superior to the strongest baseline by 0.52\%. Similarly, we test our methods on ImageNet-1K. For our method, we build it upon the available backbone provided by ACIL where a ResNet-18 was well-trained based on half of the ImageNet-1K datasets. This ImageNet-1K B500 setting is also used in PODNet, AANets, and ACIL but not in DER and FOSTER. For this, we mark DER and FOSTER with B0, i.e., training on ImageNet-1K from scratch. Although training from scratch seems more attractive, they rely on storing 20 000 old task exemplars that our method does not. It can be observed that our method outperforms the strongest baseline by 0.82\%.

\begin{table}[ht]
	\caption{Comparison results using the metric AIA(\%). $^*$ denotes additional exemplar buffer is required during CIL.}
	\label{Table_AIA}
	\vskip 0.15in
	\begin{center}
	\begin{small}
	\begin{sc}
	\begin{tabular}{lcc}
		\toprule
		Method & ImageNet-100 & ImageNet-1K \\ \midrule
		PODNet$^*$ & 74.33 & 64.13 \\
		AANets$^*$ & 75.58 & 64.85 \\
		DER$^*$ & 77.73 & 66.73 (B0) \\
		FOSTER$^*$ & 77.54 & 68.34 (B0) \\
		ACIL & 74.76 & 64.84 \\
		Ours & 78.25 & 69.16 \\ \bottomrule
	\end{tabular}
	\end{sc}
	\end{small}
	\end{center}
	\vskip -0.10in
\end{table}

\end{document}